\theoremstyle{plain}
\newtheorem{theorem}{Theorem}[section]
\newtheorem{proposition}[theorem]{Proposition}
\theoremstyle{definition}
\theoremstyle{remark}
\icmltitlerunning{WaveGAS: Waveform Relaxation for Scaling Graph Neural Networks}
\begin{document}

\twocolumn[
\icmltitle{WaveGAS: Waveform Relaxation for Scaling Graph Neural Networks}




\begin{icmlauthorlist}
\icmlauthor{Jana Vatter}{tum}
\icmlauthor{Mykhaylo Zayats}{ibm}
\icmlauthor{Marcos Martínez Galindo}{ibm}
\icmlauthor{Vanessa López}{ibm}
\icmlauthor{Ruben Mayer}{uba}
\icmlauthor{Hans-Arno Jacobsen}{uto}
\icmlauthor{Hoang Thanh Lam}{ibm}
\end{icmlauthorlist}

\icmlaffiliation{tum}{Technical University of Munich, Germany}
\icmlaffiliation{ibm}{IBM Research, Dublin, Ireland}
\icmlaffiliation{uto}{University of Toronto, Canada}
\icmlaffiliation{uba}{University of Bayreuth, Germany}

\icmlcorrespondingauthor{Jana Vatter}{jana.vatter@tum.de}

\icmlkeywords{Graph Neural Networks, Waveform Relaxation}

\vskip 0.3in
]



\printAffiliationsAndNotice{}  

\begin{abstract}
With the ever-growing size of real-world graphs, numerous techniques to overcome resource limitations when training Graph Neural Networks (GNNs) have been developed. One such approach, GNNAutoScale (GAS), uses graph partitioning to enable training under constrained GPU memory. GAS also stores historical embedding vectors, which are retrieved from one-hop neighbors in other partitions, ensuring critical information is captured across partition boundaries. The historical embeddings which come from the previous training iteration are stale compared to the GAS estimated embeddings, resulting in approximation errors of the training algorithm. Furthermore, these errors accumulate over multiple layers, leading to suboptimal node embeddings. To address this shortcoming, we propose two enhancements: first, WaveGAS, inspired by waveform relaxation, performs multiple forward passes within GAS before the backward pass, refining the approximation of historical embeddings and gradients to improve accuracy; second, a gradient-tracking method that stores and utilizes more accurate historical gradients during training. Empirical results show that WaveGAS enhances GAS and achieves better accuracy, even outperforming methods that train on full graphs, thanks to its robust estimation of node embeddings.
\end{abstract}

\section{Introduction}
GNNs have gained widespread popularity in numerous domains \cite{ying2018graph, reiser2022graph, buterez2024transfer, zhou2024reconstructed}. Many real-world graphs consist of billions of nodes and edges \cite{hu2021ogb, snapnets, gupta2013wtf}. Training GNNs on these large-scale graphs presents significant challenges due to resource limitations. State-of-the-art techniques involve partitioning these graphs into smaller subgraphs or mini-batches that fit within GPU memory constraints and training on each individual mini-batch separately \cite{fey2021gnnautoscale, chiang2019cluster, zeng2020graphsaint, zheng2022distributed, zheng2022bytegnn, yang2023betty, li2021training}. However, this sampling-based approach incurs information loss, as it tends to overlook the propagation of information across different partitions. To mitigate this challenge, the GNNAutoScale (GAS) \cite{fey2021gnnautoscale} approach employs a historical embedding technique to maintain a record of node embeddings in a secondary memory. During training on a particular mini-batch, historical embeddings of one-hop nodes from other mini-batches that are directly connected to nodes within the target mini-batch are retrieved and transferred to the GPUs. By approximating embeddings of nodes that are outside of the mini-batch, GAS rivals the results obtained using the entire graph training methods. 

GAS has proven to be effective on various benchmark datasets, but despite its innovative design, it also has certain limitations. The historical embeddings which come from the previous training iteration are stale compared to the embedding estimated within current iteration. As discussed in detail in Section \ref{sec:gas_flaws}, the staleness corrupts the computation of embeddings in the forward pass of the model and also corrupts gradients of the loss function used to update model weights in the backward pass. The resulting errors in the GAS training procedure accumulate over multiple layers and lead to deviations in the obtained node embeddings and suboptimal results compared to full-batch training.

Our work explores strategies to overcome inherent errors of the GAS approach and, in particular, to mitigate staleness of historical embeddings. To this end, we explore parallels between GNN's forward pass and forward solvers of systems of Ordinary Differential Equations (ODE) \cite{xhonneux2020continuous, poli2020graph}. We get inspiration from the Waveform Relaxation (WR) \cite{white1987waveform}, a technique proven to efficiently solve large systems of ODEs by iteratively solving subsystems and updating solutions. Borrowing from the WR methodology, we introduce WaveGAS, an approach which extends GAS by additionally running multiple forward passes to achieve a more frequent update of historical embedding before running a single backward pass. WaveGAS reduces the staleness of the historical embeddings and subsequently allows us to obtain more accurate embeddings. In section \ref{sec:methods} we introduce WaveGAS and provide intuition for its staleness mitigation mechanisms. In section \ref{sec:results}, we demonstrate its efficiency by running a set of experiments, which indeed confirm that WaveGAS achieves higher accuracy than GAS while maintaining the same memory footprint, albeit a trade-off with a proportionally longer runtime.

\section{Background}
\label{sec:backgrounds}

\begin{figure*}[htb]
\center{\includegraphics[width=1.0\textwidth]{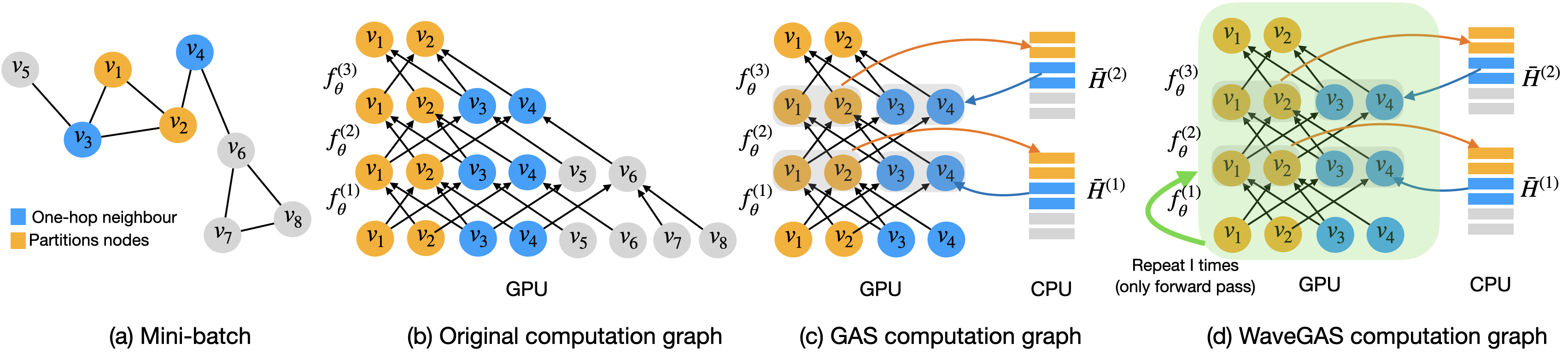}}
\caption{\label{fig:gas} This figure depicts: (a) A graph with mini-batch selection, (b)its corresponding 2-layer GNN full computational graph, (c) a partial computational graph utilizing historical embeddings on the CPU, and (d) our enhanced WaveGAS algorithm , inspired by WR. WaveGAS performs multiple forward passes through the partial computational graphs to refine historical embeddings before executing a backward pass to update the network parameters.}
\end{figure*}

In this section, we provide a generic formulation of the GNN and GAS \cite{fey2021gnnautoscale} approaches and then describe the limitations of GAS that motivated our work.

\subsection{GNNs formulation}
GNNs are a type of neural network designed to handle tasks on graph structured data. Let $\mathcal{G}=(\mathcal{V}, \mathcal{E})$ denote a graph where $\mathcal{V}$ is a set of nodes and $\mathcal{E}$ is a set of edges. GNNs compute the embedding $h_v$ of each node $v\in\mathcal{V}$ based on its connections to other nodes. The parameters of GNNs are optimized by minimizing a task-specific loss function, typically for node-level classification or prediction tasks, though other objectives are also possible.

In this work, we consider a broad class of GNN models that follow a message-passing scheme \cite{gilmer2017neural}. In this framework, the forward pass of the model for computing the embeddings of a node $v$ is defined as:
\begin{equation}
\label{eq:forward_pass}
\begin{aligned}
    h^{t,l+1}_{v} &= f_{\theta_t}^{l+1}\left(h^{t,l}_v, m_v^{t,l} \right) \\
    m_v^{t,l} &= \bigoplus \Bigl( \Bigl\{ g_{\theta_t}^{l+1}(h^{t,l}_{v}, h^{t,l}_{w})\Bigr\}_{w \in N(v)} \Bigr)
\end{aligned}
\end{equation}
where $l+1$ denotes network layer index and $t$ is an optimization iteration index (epoch). $\theta$ represents the GNN model weights parameterizing differentiable message-passing functions $f_{\theta_t}^{l+1}(\cdot)$ and $g_{\theta_t}^{l+1}(\cdot)$. $\bigoplus$ is a permutation-invariant aggregation function typically taken as the sum, mean or maximum and operates on multisets of messages incoming from a set $N(v)$ of neighbors of node $v$. Different choices of $f_{\theta}(\cdot)$, $g_{\theta}(\cdot)$ and $\bigoplus$ result in different GNN architectures such as Graph Convolutional Network (GCN)~\cite{kipf2016semi} or Graph Attention Network (GAT)~\cite{velivckovic2018graph}. For an overview of possible GNN architectures the reader is referred to \citet{wu2020comprehensive,zhou2020graph}.

For the backward pass step, one needs to compute gradients of a given loss function $\mathcal{L}(\cdot) = \mathcal{L}(\{h_v\}_{v\in\mathcal{V}})$ with respect to the model parameters $\theta$. It is important to note that the embedding vectors $h_v^{t, l}$ are composite vector-functions of a variable $\theta$. Thus, following the chain rule this would require computation of the following terms:
\begin{equation}\begin{aligned}
\label{eq:grad_terms}
    \frac{d g_{\theta_t}^{l+1}(h^{t,l}_{v}, h^{t,l}_{w})}{d\theta}&=\left.\frac{d g_{\theta_t}^{l+1}(h, h^{t,l}_{w})}{d h}\right\rvert_{h=h^{t,l}_{v}} \frac{d h^{t,l}_{v}}{d\theta} \\
    &+ \left.\frac{d g_{\theta_t}^{l+1}(h^{t,l}_{v}, h)}{d h}\right\rvert_{h=h^{t,l}_{w}} \frac{d h^{t,l}_{w}}{d\theta}
\end{aligned}\end{equation}
for some given $\theta$ and for each $v\in\mathcal{V}$ and $w\in N(v)$.

\subsection{GAS formulation}
For mini-batch computation of node embeddings, let $\mathcal{B}\subset \mathcal{V}$ denote a subset of graph nodes or a mini-batch. The neighborhood of each node $v$ can then be divided into two parts: the first containing nodes within the mini-batch, i.e., $N(v)\cap B$, and the second containing nodes outside the mini-batch, i.e., $N(v)\backslash B$. Using this partition, the aggregated message computation can be rewritten as:
\begin{align}
    m_v^{t,l} = \bigoplus \Bigl(
    & \Bigl\{ g_{\theta_t}^{l+1}(h^{t,l}_{v}, h^{t,l}_{w})\Bigr\}_{w \in N(v)\cap B} \\ 
    \cup & \Bigl\{ g_{\theta_t}^{l+1}(h^{t,l}_{v}, h^{t,l}_{w}) \Bigr\}_{w \in N(v)\backslash B} \Bigr) \label{eq:ext_nodes_messeges}
\end{align}

A naive mini-batching approach drops the term in equation~\eqref{eq:ext_nodes_messeges} corresponding to nodes outside $\mathcal{B}$ during the forward pass. While this enables independent computation of node embeddings in mini-batches, it also removes a potentially significant amount of information from the model.

In contrast, GAS approximates the embeddings of one-hop neighbors outside the mini-batch using historical embeddings, which are taken as embeddings computed during the previous optimization iteration and stored in secondary memory, mimicking training on the entire graph (Figure \ref{fig:gas}.b). During training on a specific mini-batch $\mathcal{B}$ (as illustrated in Figure \ref{fig:gas}.c) historical embeddings of its one-hop neighbors are loaded into GPU memory and GNN computes node embeddings for nodes within $\mathcal{B}$ which are then stored back in the historical embeddings.

Let $\bar{h}_v^{t,l} = h_v^{t-1,l}$ denote the historical embedding of node $v\in\mathcal{B}$ at layer $l$ after $t-1$ epochs. The aggregated message $m_v^{t,l}$ can then be approximated as:
\begin{equation}\begin{aligned}
\label{eq:gas_mpass}
    m_v^{t,l} \approx \bigoplus \Bigl(
    & \Bigl\{ g_{\theta_t}^{l+1}(h^{t,l}_{v}, h^{t,l}_{w})\Bigr\}_{w \in N(v)\cap B} \\ 
    \cup & \Bigl\{ g_{\theta_t}^{l+1}(h^{t,l}_{v}, \bar h^{t,l}_{w}) \Bigr\}_{w \in N(v)\backslash B} \Bigr)
\end{aligned}\end{equation}

This approach preserves much of the information from nodes outside the mini-batch while maintaining computational efficiency. 

\subsection{GAS flaws}
\label{sec:gas_flaws}

The main flaw of the GAS approach is that historical embeddings $\bar{h}_v^{t,l}$ run stale compared to the GAS embedding $\tilde{h}_v^{t,l}$ approximating full graph embeddings $h_v^{t,l}$. That is $\bar{h}_v^{t,l} \neq \tilde h_v^{t,l}$ and $0 \leq \|\bar{h}_v^{t,l} - \tilde h_v^{t,l}\| \leq \epsilon$, since historical embeddings are not updated within the current optimization step. This staleness introduces disturbances into the forward and backward passes of the training algorithm.

In the forward pass, they disturb the second term in the expression \eqref{eq:gas_mpass} for computing the aggregated message $m_v^{t,l}$. This induces approximation errors $\|\tilde{h}_v^{t,l} - h_v^{t,l}\|$ and, as demonstrated by~\cite{fey2021gnnautoscale}, these errors accumulate across multiple layers and depend on the level of staleness $\epsilon$ and the Lipschitz constants of the learned message-passing functions.

In the backward pass, the staleness disturbs the computation of the components of the loss function gradient~\eqref{eq:grad_terms}. Indeed, GAS approximates the first term in~\eqref{eq:grad_terms} by means of historical embeddings as follows:
\begin{equation}
    \left.\frac{d g_{\theta_t}^{l+1}(h, h^{t,l}_{w})}{d h}\right\rvert_{h=h^{t,l}_{v}} \frac{d h^{t,l}_{v}}{d\theta}
    \approx \left.\frac{d g_{\theta_t}^{l+1}(h, \bar h^{t,l}_{w})}{d h}\right\rvert_{h=h^{t,l}_{v}} \frac{d h^{t,l}_{v}}{d\theta}
\end{equation}
Furthermore, because historical embeddigs are computed at the previous optmization epoch they are treated as a constant with respect to current epoch parameters $\theta$ thereby $\frac{d \bar h^{t,l}_{w}}{d\theta}=0$ and
\begin{equation}
    \left.\frac{d g_{\theta_t}^{l+1}(h^{t,l}_{v}, h)}{d h}\right\rvert_{h=\bar h^{t,l}_{w}} \frac{d \bar h^{t,l}_{w}}{d\theta} = 0
\end{equation}
leading to the omission of the second term in~\eqref{eq:grad_terms} when computed within GAS approach.

This demonstrates that the gradient of the loss function with respect to the parameters $\theta$ in the mini-batch computation graph is only an approximation of the gradient in the full-batch computation graph. The level of gradient approximation depends on the number of nodes at the interface between mini-batches, i.e., the total size of multisets $N(v)\backslash B$, and subsequently on the staleness level of the corresponding historical enbeddings and smoothness of the corresponding GAS estimated embeddings.

In summary, the staleness of the historical embeddings plays an important role in inducing approximation errors of the node embeddings by corrupting both forward and backward passes of the training algorithm. The authors of the original work~\cite{fey2021gnnautoscale} suggested two ways to tighten those approximation errors: i) the first one uses a sophisticated graph partitioning strategy to minimize the number of nodes at the interface between mini-batches; ii) the second one adds an additional regularization term to the loss function enforcing local Lipschitz continuity of the message-passing functions. They, however, provided no recipe for mitigating staleness of historical embeddings. Reducing the staleness of historical embeddings, and thereby minimizing the approximation error, is the primary motivation for our work.

\section{Methods}
\label{sec:methods}
To address the issue of staleness in historical embeddings and, consequently, improve the quality of the computed embeddings, we propose the WaveGAS approach.

\subsection{GNN as an ODE discretization}
The structural similarity between the forward pass of certain types of GNNs and ODEs has been previously noted, for example, in~
\cite{xhonneux2020continuous, poli2020graph} with the goal of designing novel GNN architectures that resemble ODE. In this work, we take a similar perspective: we borrow ideas from numerical methods for solving ODE to develop a more computationally efficient approach for GNNs. To this end, we first establish connections between a class of GNNs and discretized ODE.

\begin{proposition} \label{prop:gnn_as_ode} If the update function $f_{\theta}^{l}(\cdot,\cdot)$ is defined as the sum of its arguments, then the forward pass of the GNN in~\eqref{eq:forward_pass} can be interpreted as a discretization of an ODE system.
\end{proposition}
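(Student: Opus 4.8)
The plan is to exhibit an explicit continuous-time ODE system whose forward Euler discretization (with unit step size) reproduces exactly the recursion in~\eqref{eq:forward_pass} under the hypothesis $f_\theta^{l+1}(h,m) = h + m$. Concretely, I would introduce a continuous "layer-time" variable $\tau$ and a node-state trajectory $x_v(\tau)$, and propose the vector field
\begin{equation*}
\frac{d x_v(\tau)}{d\tau} = \bigoplus\Bigl(\bigl\{ g_\theta^{\lceil\tau\rceil}\bigl(x_v(\tau), x_w(\tau)\bigr)\bigr\}_{w\in N(v)}\Bigr),
\end{equation*}
so that the message aggregation plays the role of the velocity field. The first step is therefore just to rewrite the GNN update under the stated assumption: $h_v^{t,l+1} = h_v^{t,l} + m_v^{t,l}$, which is literally $x_v(l+1) = x_v(l) + 1\cdot F(x(l))$, the forward Euler step for the above system with $\Delta\tau = 1$ and $F$ the stacked message-aggregation map. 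I would remark that the epoch index $t$ is a spectator here (the parameters $\theta_t$ are frozen during a forward pass), and that the layer-dependent $g_\theta^{l+1}$ is accommodated by letting the vector field be piecewise-constant in $\tau$ on unit intervals — i.e. a nonautonomous ODE — so that integer times recover the successive layers.

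The key steps, in order: (1) state the assumption $f_\theta^{l+1}(a,b)=a+b$ and substitute into~\eqref{eq:forward_pass} to get the additive recursion; (2) define the continuous system above on $\tau\in[0,L]$ with initial condition $x_v(0)=h_v^{t,0}$ (the input features); (3) recall the forward Euler scheme $\hat x(\tau+\Delta\tau) = \hat x(\tau) + \Delta\tau\, F(\tau,\hat x(\tau))$ and specialize to $\Delta\tau=1$, sampling at $\tau=0,1,\dots,L-1$; (4) observe by induction on $l$ that the Euler iterates coincide with $h_v^{t,l}$, since both satisfy the same one-step recursion with the same initial value. I would note in passing that one may also absorb a learnable step size or normalization constant into the definition, matching architectures like GCN, and that this is exactly the viewpoint of~\cite{xhonneux2020continuous,poli2020graph}.

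The main obstacle — really the only subtlety worth flagging — is handling the layer-dependence and discreteness of the message functions $g_\theta^{l+1}$ cleanly: the "ODE" is not a single autonomous system but a nonautonomous one whose right-hand side is only piecewise continuous in $\tau$, so one should either (a) state the result for the weight-tied case $g_\theta^{l+1}=g_\theta$ (genuinely autonomous, cleanest) and then remark that the general case follows by concatenating the flows of $L$ distinct autonomous systems on consecutive unit intervals, or (b) work with the nonautonomous formulation from the start and simply note that existence/uniqueness on each subinterval suffices since we only evaluate the Euler map, never the exact flow. I would adopt (b) for brevity: because the proposition only claims that the forward pass "can be interpreted as a discretization," no convergence or consistency estimate is needed — it suffices that the discrete scheme applied to the constructed ODE reproduces the GNN exactly, which is immediate from the matching one-step recursions. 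A secondary, purely cosmetic point is that $\bigoplus$ over $N(v)\cap B$ versus the historical split of~\eqref{eq:gas_mpass} is irrelevant here since the proposition concerns the idealized full forward pass~\eqref{eq:forward_pass}, not its GAS approximation.
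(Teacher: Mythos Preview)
Your proposal is correct and follows essentially the same route as the paper: rewrite the recursion as $h_v^{l+1}=h_v^l+m_v^l$ under the additive-update hypothesis, introduce a continuous layer-time variable $\tau$, and identify the GNN forward pass with forward Euler at unit step for a nonautonomous ODE whose right-hand side is the aggregated message. Your treatment of the layer-dependent $g_\theta^{l+1}$ via a piecewise-constant (ceiling) extension is slightly more explicit than the paper's, which simply posits a non-unique continuous-in-$\tau$ function $g(\tau,\cdot,\cdot)$ matching $g^{l+1}$ at integer times; both formulations serve the same purpose and the argument is otherwise identical.
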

\begin{proof}
During the forward pass, the GNN parameters $\theta$ and the epoch index $t$ are fixed. For simplicity, we omit them and use the assumption about $f^{l}(\cdot,\cdot)$ to write:
\begin{equation}
\begin{aligned}
    h^{l+1}_{v} &= h^{l}_v + m^{l}(h^{l}_{v}, \{h^{l}_{w}\}_{w \in N(v)}) \\
    m^{l}(h_{v}, H_v) &= \bigoplus \Bigl( \Bigl\{ g^{l+1}(h_{v}, h_{w})\Bigr\}_{h_w \in H_v} \Bigr)
\end{aligned}
\end{equation}

Assuming $\Delta\tau=1$, the layer index $l$ can be treated as a discrete index of a continuous variable $\tau$, analogous to time in ODE and $h^{l+1}_{v}$ can be seen as a discretization of a continuous vector function $h_v(\tau)$. Rolling this discretization back to the continuous domain, we obtain the ODE:
\begin{equation}
\begin{aligned}
    \frac{dh_v(\tau)}{d\tau} &= m(\tau, h_{v}(\tau), \{h_{w}(\tau)\}_{w \in N(v)}) \\
    m(\tau, h_{v}(\tau), H_v(\tau)) &= \bigoplus \Bigl( \Bigl\{ g(\tau, h_{v}(\tau), h_{w}(\tau))\Bigr\}_{h_w \in H_v} \Bigr)
\end{aligned}
\end{equation}
where $\tau\in[0,L]$ and $g(\cdot,\cdot,\cdot)$ is a non-unique continuous over $\tau$ function which corresponds to the discrete function $g^{l+1}$ such that
\begin{equation}
    g(\tau_{l},h_v(\tau_l),h_w(\tau_l)) = g^{l+1}(h_{v}^l, h_{w}^l)    
\end{equation}
\end{proof}

ODE systems can be computationally demanding due to their large size. However, they can be efficiently solved in batches using the WR method. This iterative method alternates between fixing one variable (aka loading it from history) and solving for the others providing guarantees of solution convergence. Although Proposition~\ref{prop:gnn_as_ode} applies only to a specific class of GNNs, it offers a strong intuition for how WR can be adapted to run the forward pass computation of GNNs. This idea forms the basis for our WaveGAS approach.

\subsection{WaveGAS formulation}
WaveGAS extends the GAS approach by incorporating ideas from the WR method used to solve large systems of ODEs. Rather than performing a single forward pass during each training step, WaveGAS performs multiple forward passes to achieve more frequent updates to historical embeddings before running a single backward pass.

Let $s=[0,\cdots,I-1]$ represent the index of the waveform iteration and let $\bar h^{t,l,s}_{v}$ denote the historical embedding in WaveGAS. For $s=0$ we initialize $\bar h^{t,l,0}_{v}=\bar h^{t,l}_{v}$ where $\bar h^{t,l}_{v}$ is the historical embedding used in GAS. For each waveform iteration $s$ we iterate over mini-batches, computing their nodes embeddings and upgrading the corresponding historical embeddings for each layer as follows:
\begin{equation}
    h^{t,l+1}_{v} = f_{\theta_t}^{l+1}\left(h^{t,l}_v, m_v^{t,l} \right)
\end{equation}
\begin{equation}
\begin{aligned}
    m_v^{t,l} \approx \bigoplus \Bigl(
    & \Bigl\{ g_{\theta_t}^{l+1}(h^{t,l}_{v}, h^{t,l}_{w})\Bigr\}_{w \in N(v)\cap B} \\
    \cup & \Bigl\{ g_{\theta_t}^{l+1}(h^{t,l}_{v}, \bar h^{t,l,s}_{w}) \Bigr\}_{w \in N(v)\backslash B} \Bigr)
\end{aligned}
\end{equation}
\begin{equation}
    \bar h^{t,l+1,s+1}_{v} = h^{t,l+1}_{v}
\end{equation}

This process runs without tracking gradients and is designed solely to improve the historical embeddings. Once the waveform iterations are complete, a final forward pass is conducted with tracking gradients, followed by a backward pass to update model parameters $\theta$. The algorithm of WaveGAS is outlined in Algorithm~\ref{alg:wavegas} and visualized in Figure~\ref{fig:wavegasvis}.

The number of waveform iterations $I$ plays an important role in WaveGAS, as it prescribes how many times historical embeddings are updated within one round of parameters update. Larger $I$ allows for more updates between mini-batches but increases computational cost. Importantly, the memory footprint of WaveGAS remains the same as that of GAS, regardless of the value of $I$. Finally, when $I=1$, i.e., 1 waveform iteration is used, WaveGAS reduces to the standard GAS algorithm.
\begin{figure}[htb]
\center{\includegraphics[width=1.0\columnwidth]{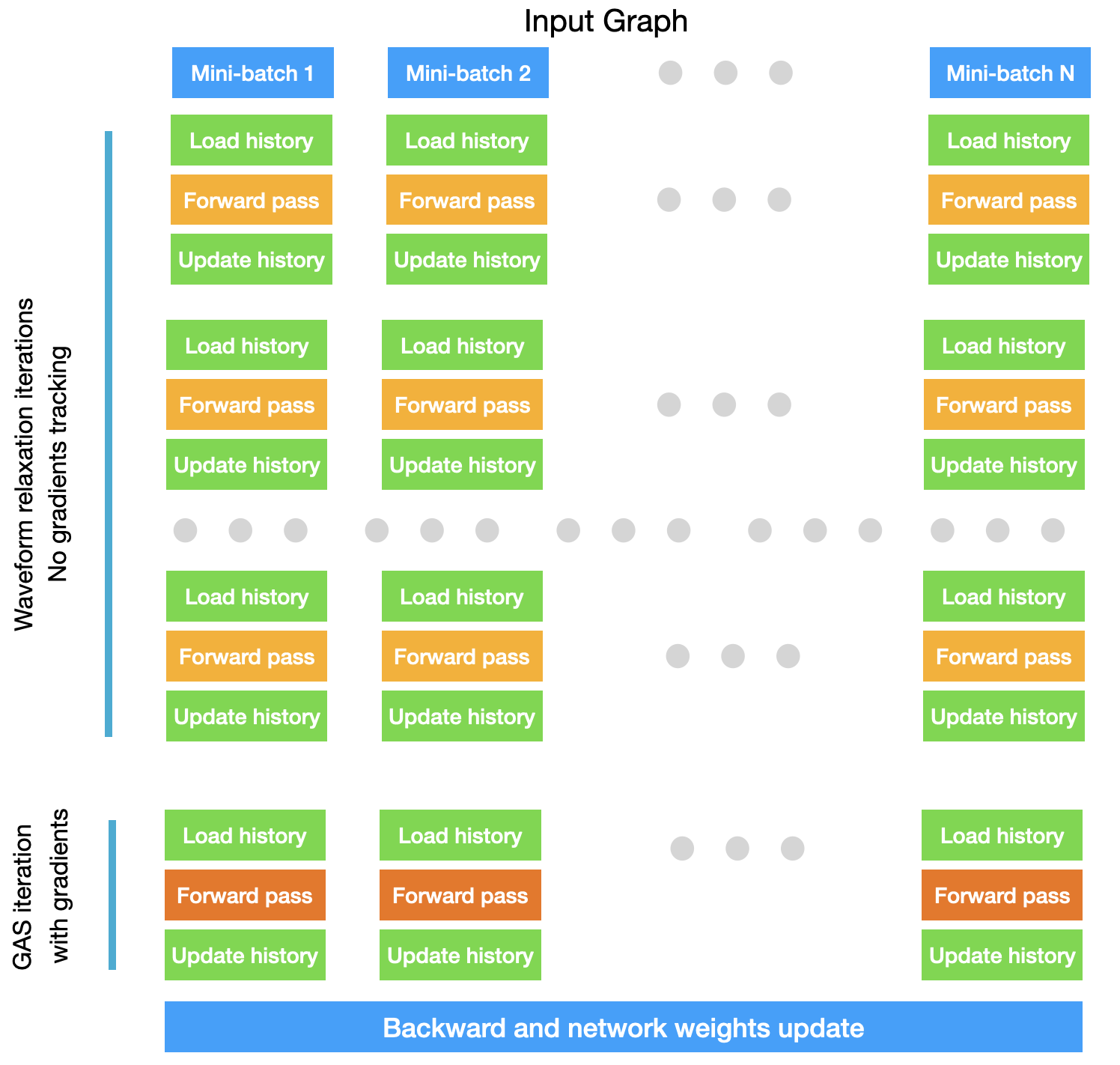}}
\caption{\label{fig:wavegasvis} A visualization of the WaveGAS algorithm running waveform iterations.}
\end{figure}

\begin{algorithm}[H]
    \caption{Training Loop for our WaveGAS method}
    \label{alg:wavegas}
\begin{algorithmic}[1]
    \STATE \texttt{// Training data $\mathcal{D}_{train}$, GNN model $model$, loss function $\mathcal{L}$, epochs $N$, WaveGAS iterations $I$, layers $L$}
    \FOR{epoch $= 1$ to $N$}
    \STATE \texttt{// WaveGAS Iterations}
        \FOR{WaveGAS iteration $= 1$ to $I$}
            \FOR{each batch in $\mathcal{D}_{train}$}
            \STATE \texttt{// Forward Pass}
            \STATE with $\text{torch.no\_grad()}$: $\hat{Y} \gets model(\text{batch})$
                \FOR{layer in $L$}
                \STATE Push and pull historical embeddings\;
                \ENDFOR
            \ENDFOR
        \ENDFOR

    \STATE \texttt{// Main Training Loop}
    \FOR{each $(\text{batch}, Y)$ in $\mathcal{D}_{train}$}
        \STATE \texttt{// Forward Pass}
        \STATE $\hat{Y} \gets model(\text{batch})$\;
        \FOR{layer in $L$}
            \STATE Push and pull historical embeddings\;
        \ENDFOR
        \STATE Compute loss: $\mathcal{L}_{\text{batch}} \gets \mathcal{L}(\hat{Y}, Y)$\;
        \STATE Zero gradients: $\text{optimizer.zero\_grad}()$\;
        \STATE Backward pass: $\mathcal{L}_{\text{batch}}.\text{backward}()$\;
        \STATE Update parameters: $\text{optimizer.step}()$\;
    \ENDFOR
        
    \ENDFOR
\end{algorithmic}
\end{algorithm}

\subsection{WaveGAS convergence}
The main advantage of the WaveGAS approach is that both historical embeddings and computed node embeddings undergo multiple rounds of updates that progressively align them closer to each other. This process reduces the staleness in historical embeddings, thereby improving the quality of the resulting embeddings.

To quantify the rate at which staleness declines, we again refer to WR method, which has theoretical convergence bounds. In WR, staleness is analogous to the solution convergence rate. Applying this bound to GNN satisfying conditions of Proposition~\ref{prop:gnn_as_ode} provides the following result:
\begin{equation}
\label{eq:convergence_rate}
    \|\bar h^{t,L,I}_{v} - \hat h^{t,L}_{v}\| \leq \frac{(CL)^I}{I!} \|\bar h^{t,L,0}_{v} - \hat h^{t,L}_{v}\|
\end{equation}
Here $\hat h^{t,L}_{v}$ denotes the embeddings obtained by WaveGAS, $L$ is the number of GNN layers and $C$ is a constant related to the Lipschitz continuity of the message-passing functions. Since the term $\|\bar h^{t,L,0}_{v} - \hat h^{t,L}_{v}\|$ represents the staleness level in GAS approach, the equation~\eqref{eq:convergence_rate} demonstates that the staleness in WaveGAS decreases superlinearly and is significantly lower then in GAS. It further shows that even relatively small number of waveform iterations $I$ can make staleness levels negligible. It is important to note, however, that only certain GNNs satisfy the conditions of Proposition~\ref{prop:gnn_as_ode}. While this does not limit the applicability of WaveGAS to more general GNNs arcitectures, determining staleness convergence rate for such cases remains an open problem for future work.

Additionally, an analysis of WaveGAS convergence to the full graph trained embeddings must account for the loss function gradient computation. While WaveGAS reduces the approximation error in the first term of~\eqref{eq:grad_terms} due to reduced staleness, it does not address the omission of the second term. As a result, while WaveGAS improves upon GAS, it still relies on approximations.

\subsection{GradAS formulation}
One potential method to address the omission of the second term in the loss function gradient formulation~\eqref{eq:grad_terms} is to cache the partial derivatives \(\frac{d h^{t,l}_{v}}{d\theta}\) in the history, similarly to how historical embeddings are handled. This approach, referred to as Gradient Auto Scaling (GradAS), aims to provide a more accurate gradients computation.

The main bottleneck of GradAS lies in the required cache size, which promptly becomes impractically large, scaling with the product of the number of nodes and the total number of model parameters. This makes loading historical gradients into GPU memory impossible for all but very small mini-batches. Consequently, the input graph must be partitioned into a very large number of mini-batches. In its turn, this inflates errors due to a high number of the nodes on the interface between mini-batches and may diminish benefits of correcting gradient terms. The substantial computational overheads make GradAS highly sensitive to the graph partitioning settings limiting its usability.

\section{Experiments}
In this work, we focus on improving the GAS method by introducing our WaveGAS approach. To evaluate its performance, we compare WaveGAS against the original GAS model and full-graph training. Our experiments leverage multiple datasets from diverse domains, including citation graphs \cite{sen2008collective, yang2016revisiting} and co-purchasing networks \cite{shchur2018pitfalls}. These datasets are standard benchmarks, and we adopt the same training, validation, and test splits as proposed in \cite{fey2021gnnautoscale}. Dataset characteristics and hyperparameter configurations are detailed in Table \ref{tab:datasets}. For GAS, we follow the open-source implementation and guidelines provided by the original codebase\footnote{\url{https://github.com/rusty1s/pyg_autoscale}}.

The hyperparameter choices are based on the recommended settings for the GAS model \cite{fey2021gnnautoscale}. A batch size of $n$ indicates that $n$ training samples are processed together before proceeding to the next batch. The number of partitions specifies how many subgraphs the full graph is divided into, while the number of iterations refers to the forward passes performed when using the waveform relaxation method. For WaveGAS, we varied the number of iterations from 1 to 10, selecting the optimal number based on the validation set performance, with the final results reported on the test set.

We employ a two-layer GCN \cite{kipf2016semi} as the GNN architecture. Each model is trained for 200 epochs, and all experiments are repeated 20 times to ensure reliability. The reported results include the average test accuracy with standard deviations. The optimal number of WaveGAS iterations is determined using validation accuracy. All training is conducted on a single Nvidia RTX A6000 (48GB), with training histories stored in RAM.

\begin{table*}[t]
    \centering
    \caption{Dataset characteristics and default hyperparameter configuration followed \cite{fey2021gnnautoscale}}
    \begin{tabular}{|c|c|c|c|c|c|c|c|c|}
        \hline
        \textbf{Dataset} & \textbf{Nodes} & \textbf{Edges} & \textbf{Features} & \textbf{Classes} & \textbf{Batchsize} & \textbf{Learning rate} & \textbf{\# partitions} & \textbf{\# iters.} \\ \hline
        Cora & 2,708 & 5,278 & 1,433 & 7 & 10 & 0.01 & 40 & 3 \\
        CiteSeer & 3,327 & 4,552 & 3,703 & 6 & 8 & 0.01 & 24 & 2 \\
        PubMed & 19,717 & 44,324 & 500 & 3 & 4 & 0.01 & 8 & 5 \\ 
        AmazonPhoto & 7,650 & 119,081 & 745 & 8 & 16 & 0.01 & 32 & 10 \\ 
        AmazonComputer & 13,752 & 245,861 & 767 & 10 & 16 & 0.01 & 32 & 8 \\ 
        CoauthorCS & 18,333 & 81,894 & 6,805 & 15 & 1 & 0.01 & 2 & 7 \\ 
        CoauthorPhysics & 34,493 & 247,962 & 8,415 & 5 & 2 & 0.01 & 4 & 9 \\ 
        WikiCS & 11,701 & 215,863 & 300 & 10 & 16 & 0.02 & 32 & 6 \\ \hline
    \end{tabular}
    \label{tab:datasets}
\end{table*}

\subsection{Results} \label{sec:results}
In this section, we present our experimental methodology and evaluate our WaveGAS method in comparison to GAS. We analyze WaveGAS, show its statistical significance and investigate the choice of number of iterations. 

For WaveGAS, we run a set of experiments with a different number of waveform iterations ranging from 1 to 11. A value of 1 indicates the standard GAS approach, more than 1 indicates our WaveGAS approach with multiple waveform iterations that do not track gradients followed by a single iteration with gradient tracking and backward pass. We choose the experiment and the corresponding number of iterations with the highest accuracy on the validation set and report the accuracy on the test set. These results are collected in Table \ref{tab:results} in the "WaveGAS (best \#iterations)" column. The table demonstrates that our WaveGAS approach consistently outperforms GAS (reported in the "GAS" column) across all datasets. Interestingly, the WaveGAS method even outperforms full-graph trained GNN (reported in the "Full" column) in some cases. In the column "WaveGAS (avg. iterations)" we also report the WaveGAS performance averaged for 5 different waveform iteration numbers taken from 2 to 6. 

In addition, we analyze the difference of the mean accuracy across all datasets compared to GAS. Choosing the best number of iterations (based on the validation set) leads to a delta of $+0.25$. Due to iteratively performing multiple forward passes, the network values are more accurate and the performance is improved. When investigating the average accuracy for a number of WaveGAS iterations 2 to 6, we can see that the difference of the mean accuracy across all datasets compared to GAS is $+0.17$. This further illustrates the effectiveness of WaveGAS, even with only a few WaveGAS iterations. 

Figure \ref{fig:wr} illustrates how the accuracy changes with the number of iterations taken up to 11. Interestingly, there is no clear linear increase in terms of accuracy with more iterations. Our hyperparameter configuration in Table \ref{tab:datasets} also supports this. 

It should be noted that the best number of WaveGAS iterations is chosen based on the best validation accuracy while we report the corresponding test accuracy. For this reason, it can occur that the average test accuracy of iterations 2 to 6 (reported in the "WaveGAS (avg. iterations)" column) is slightly higher than the accuracy with the best number of iterations as it is in the case of PubMed graph. 

\begin{table*}[t]
    \centering
    \caption{GAS vs. WaveGAS Performance (test accuracy, higher is better): The optimal number of WaveGAS iterations is determined based on the highest validation accuracy. For reference, the accuracy achieved with full-graph training is also reported.}
    \begin{tabular}{|c|c|c|c||c|}
    \hline
        \textbf{Dataset} & \textbf{GAS} & \textbf{WaveGAS} (best \#iterations) & \textbf{WaveGAS} (avg. iterations) & \textbf{Full} \\ \hline
        Cora & $81.54 \pm 2.43$ & $81.69 \pm 0.74$ & $\textbf{81.86} \pm 0.83$ & $81.62 \pm 0.78$ \\
        CiteSeer & $70.87 \pm 1.15$ & $\textbf{71.13} \pm 0.97$ & $70.73 \pm 1.09$ & $70.82 \pm 0.73$  \\
        PubMed & $78.89 \pm 0.69$ & $79.14 \pm 0.48$ & $\textbf{79.21} \pm 0.58$ & $79.18 \pm 0.54$ \\ 
        AmazonPhoto & $90.37 \pm 1.35$ & $\textbf{90.46} \pm 1.33$ & $90.16 \pm 1.38$ &  $90.08 \pm 1.63$ \\ 
        AmazonComputer & $ 80.42 \pm 1.76$ & $\textbf{81.34} \pm 1.98$ & $81.17 \pm 2.00$ & $80.79 \pm 2.01$ \\ 
        CoauthorCS & $90.66 \pm 0.67$ & $90.88 \pm 0.41$ & $90.79 \pm 0.51$ & $\textbf{90.94} \pm 0.76$ \\
        CoauthorPhysics & $92.57 \pm 1.07$ & $92.63 \pm 0.99$ & $92.75 \pm 0.95$ & $\textbf{93.02} \pm 0.69$ \\ 
        WikiCS & $78.78 \pm 0.55$ & $78.79 \pm 0.60$ & $78.76 \pm 0.61$ & $\textbf{78.96} \pm 0.51$ \\ \hline
        $\Delta$ Mean Accuracy & +0.00 & \textbf{+0.25} & +0.17 & \\ \hline
    \end{tabular}
    \label{tab:results}
\end{table*}

\begin{table*}[t]
    \centering
    \caption{The one-sided Wilcoxon test for the null hypothesis evaluates whether WaveGAS is less significant than GAS. A p-value below 0.05 indicates rejection of the null hypothesis.}
    \begin{tabular}{|c|c|c|c|c|c|c|c|c|c|c|} \hline
        \textbf{iterations} $I$ & 2 & 3 & 4 & 5 & 6 & 7 & 8 & 9 & 10 & 11  \\ \hline
        \textbf{p-value} & \textbf{0.0156} & 0.1250 & 0.2500 & 0.1250 & 0.0625 & 0.0625 & \textbf{0.0313} & \textbf{0.0156} & \textbf{0.0313} & \textbf{0.0313} \\ \hline
    \end{tabular}
    \label{tab:wilcoxon}
\end{table*}

In addition, we perform a one-sided Wilcoxon test \cite{wilcoxon1992individual} on all datasets and results to investigate the statistical significance of WaveGAS compared to GAS. A p-value of $<0.05$ is considered as a confirmation of the alternative hypothesis that WaveGAS is better than the baseline. Our results (Table \ref{tab:wilcoxon}) show that iterations 2, 8, 9, 10, and 11 yield significance. 

Generally, the training time increases linearly with more iterations, leading to longer training times of the WaveGAS method compared to GAS (Table \ref{tab:times}). As shown above, for most datasets up to 5 additional WaveGAS iterations results in a good performing accuracy. Consequently, there is only a slight increase in terms of training time. For instance, GAS is only $1.4\times$ faster than WaveGAS for CiteSeer and $1.8\times$ for CoauthorCS.

\begin{table}[t]
    \centering
    \caption{GAS vs WaveGAS training time (in s)}
    \begin{tabular}{|c|c|c|}
    \hline
        \textbf{Dataset} & \textbf{GAS} & \textbf{WaveGAS} \\ \hline 
        Cora & 95.0 & 280.0 \\ 
        CiteSeer & 103.6 & 145.6 \\ 
        PubMed & 132.7 & 752.0 \\ 
        AmazonPhoto & 112.2 & 690.7 \\ 
        AmazonComputer & 173.7 & 1180.0 \\ 
        CoauthorCS & 42.6 & 79.7 \\ 
        CoauthorPhysics & 1,398.3 & 11,017.0 \\ 
        WikiCS & 145.0 & 568.0 \\ \hline 
    \end{tabular}
    \label{tab:times}
\end{table}

\begin{figure}[t]
    \includegraphics[width=\columnwidth]{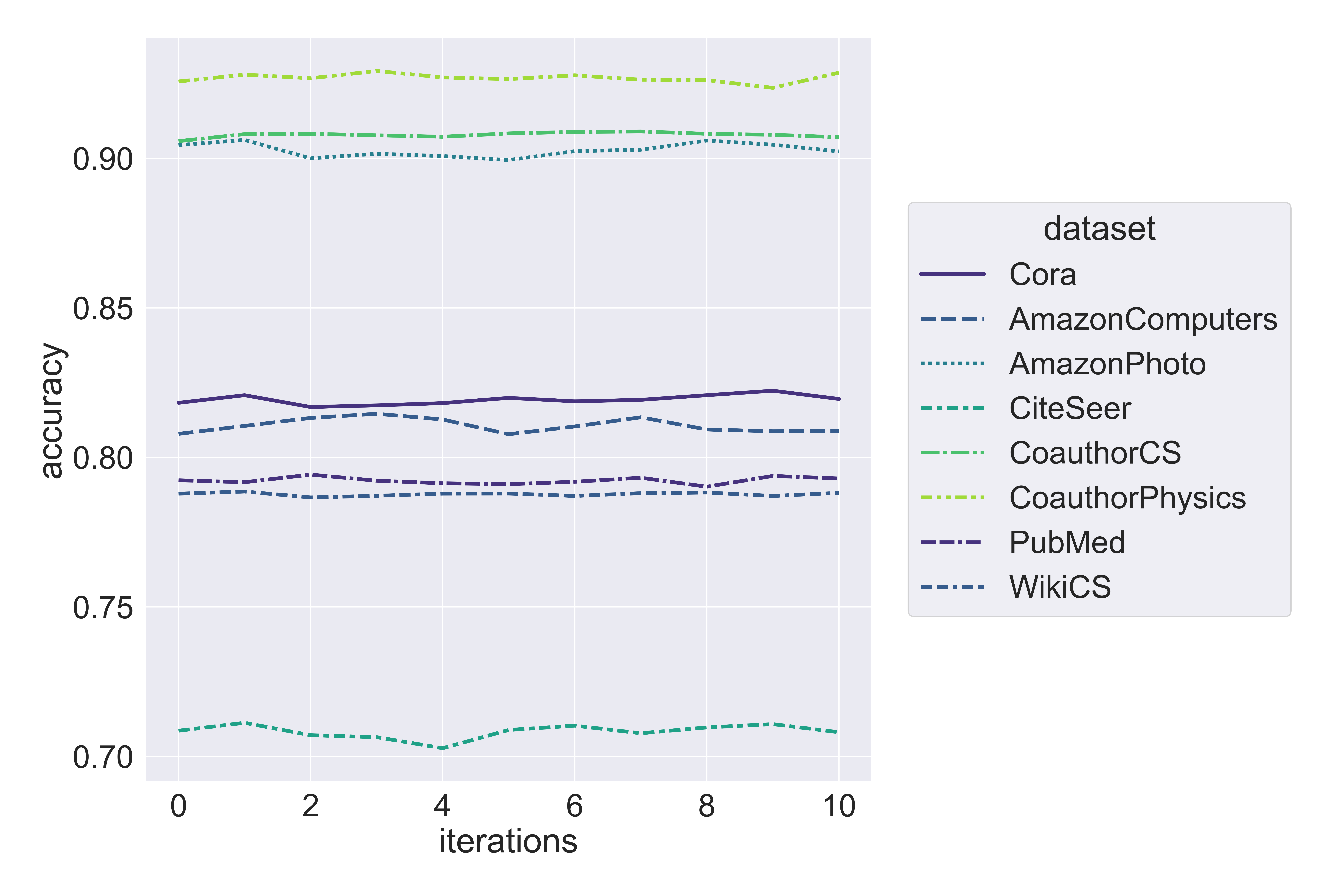}
    \caption{\label{fig:wr} WaveGAS accuracy versus the number of Waveform Relaxation iterations for all datasets}
\end{figure}

\subsection{Limitations}
Our experiments have shown that our WaveGAS method works very well and is able to improve significantly upon the GAS method. However, it should be noted that there are a few limitations. 
The training time of WaveGAS increases with more WaveGAS iterations compared to GAS only. However, among the most significant numbers of iterations is only 1 additional iteration meaning the training time only slightly rises in most cases. This is also supported when investigating the average test accuracy of 1 to 5 additional WaveGAS iterations. Here, we also experience an improvement compared to GAS while still training in an efficient manner. During inference, WaveGAS needs multiple forward passes as it does during training.

\section{Unsuccessful attempts}
When working on improving GAS and developing WaveGAS, we encountered some approaches that were not as promising as expected.

Our proposed gradient-correcting method GradAS is able to slightly improve the accuracy compared to GAS for some datasets (e.g., Cora $82.05 \pm 0.91$). However, the results vary and not clearly outperform GAS. In addition, the training time and memory consumption is quite high due to the computation and tracking of the historical gradients with the jacobian matrix. For Cora, the training time is 3 times higher compared to GAS. While GAS needs 1.5 GB of GPU memory, GradAS consumes around 11GB of GPU memory. Its resource-intensive nature reduces its practical use. 

\section{Related Work}
GNNAutoScale (GAS) \cite{fey2021gnnautoscale} partitions the graph and performs training on each of the partitions. To handle cross-partition links, historical embeddings are stored. Instead of loading and computing the full computation graph, GAS stays within the current partition and loads the historical embeddings of out-of-partition nodes to avoid information loss. This reduces training time and memory consumption. However, historical information can be stale since historical embeddings of one-hop nodes are not updated during the GNN training on the target partition, and approximation errors can occur. We distinguish ourselves by focusing on improving the historical embeddings in the GAS model to mitigate information loss. 

Another approach that partitions the graph is Cluster-GCN \cite{chiang2019cluster}. Within the partitions, sampling is performed and the GNN is trained. Due to the limitation of message passing to the current minibatch, potentially useful information outside the current partition is lost. Similar to ClusterGCN, GraphSAINT \cite{zeng2020graphsaint} also first clusters the graph to avoid neighborhood explosion and then forms minibatches inside the partitions. Information across partitions is ignored, leading to a potential decrease of the models performance. 

DistDGLv2 \cite{zheng2022distributed} employs an efficient GNN training method based on a synchronous training approach with an asynchronous minibatch generation pipeline. Resource usage is optimized, but the loss function does not consider node embeddings across different partitions. 

Besides a mini-batch training method which supports a high degree of parallelism, ByteGNN \cite{zheng2022bytegnn} proposes a graph partitioning method adapted to GNN workloads. Depending on the target nodes, the partitioning is adapted to include the k-hop neighborhood of a node. This minimizes data movement and communication. Links across partitions are not considered in the loss function. Therefore, training performance could be affected. 

Betty \cite{yang2023betty} introduces redundancy-embedded graph (REG) partitioning and memory-aware partitioning. Redundancy is mitigated, and load balance across partitions is improved. Further, the authors move from mini-batch training to micro-batch training. For each micro-batch, partial gradients are calculated which are accumulated to update the weights of the full model. In this way, memory is reduced while bypassing the loss of information when using partitioning and mini-batch training, but information is still lost due to links across micro-batches. 

\citet{li2021training} focus on training deep GNNs. The authors adopt grouped reversible GNNs which partitions tensors of initial features in groups and employs reversible residual connections. This network only stores final outputs and thus saves memory. While their approach provides fixed memory requirements irrespective of the number of layers, our approach provides fixed memory requirements irrespective of the layer width. Therefore, our approach is complementary and can be used in combination.

\citet{xue2024haste} introduce the REST method to address feature staleness by highlighting the importance of refreshing stored embeddings more frequently to align with model updates. Their approach involves performing a forward pass with batches formed according to a certain split, followed by a forward and backward pass with another split. 
In contrast, we propose a novel refinement process that performs multiple forward passes on all batches with an unchanged given split, ensuring that all historical embeddings are updated relative to a consistent state of the network parameters. This is followed by a backward pass designed specifically to enhance the robustness of historical embedding approximations. Additionally, we are the first to establish a connection between this approach and the well-known Waveform Relaxation method for solving ODEs. Furthermore, we provide an in-depth analysis of the gradient discrepancy issue in the GAS method, offering new theoretical insights that extend the understanding of feature staleness mitigation.

\section{Conclusions and Future Work}
Our work addresses the issue of stale embeddings when using the GAS method. We propose two enhancements: first, WaveGAS, which, inspired by the waveform relaxation, refines the approximation of the embeddings and gradients by performing multiple forward passes with GAS before the backward pass. Second, a method tracking gradients during training resulting in more accurate historical gradients. Our experimental results show that especially WaveGAS enhances GAS and achieves better accuracy. Compared to GAS, the mean accuracy rises by $+0.25$ across all datasets when choosing the best number of iterations according to the validation set. 

Future work could investigate our WaveGAS method with more GNN architectures, such as GAT \cite{velivckovic2018graph} or GIN \cite{xu2018powerful}. Further, it could be explored if by using less additional iterations in later training epochs, the accuracy of WaveGAS can be maintained while decreasing the training time. 

\section*{Impact Statement}
Training GNNs on large graphs has significant practical applications across various domains. Building on GAS, which introduced a method for scaling GNN training using historical embeddings, we conduct a comprehensive analysis of its theoretical limitations. To address these, we adapt well-established numerical methods for solving ODEs, and propose enhancements to improve the GAS framework. Our empirical results highlight the effectiveness of the proposed methods, and we also document unsuccessful attempts, shedding light on unresolved challenges. These insights pave the way for future research to further address the complexities of scaling GNN training.

\bibliography{bib}
\bibliographystyle{icml2025}

\end{document}